\begin{document}
\newtheorem{definition}{Definition}
\newtheorem{proposition}{Proposition}
\newcommand{\dbtilde}[1]{\accentset{\approx}{#1}}
\newcommand{\norm}[1]{\left\lVert#1\right\rVert}

\title{Enhancing Mixup-based Semi-Supervised Learning with Explicit Lipschitz Regularization}

\author[]{Prashnna Kumar Gyawali}
\author[]{Sandesh Ghimire}
\author[]{Linwei Wang}
\affil[]{pkg2182@rit.edu \\ Rochester Institute of Technology, NY, USA}

\maketitle

\begin{abstract}
The success of deep learning relies on the availability of large-scale annotated data sets, the acquisition of which can be costly requiring expert domain knowledge. Semi-supervised learning (SSL) mitigates this challenge by exploiting the behavior of the neural function on large unlabeled data.
The smoothness of the neural function 
is a commonly used assumption 
exploited in SSL. 
A successful example is 
the adoption of mixup strategy in SSL 
that enforces the global smoothness 
of the neural function 
by encouraging it to behave linearly when interpolating between training examples. 
Despite its empirical success, 
however, 
the  theoretical  underpinning  of  how mixup regularizes the neural function has not been fully understood. 
In this paper, we  
 offer  a  theoretically  substantiated 
 proposition that 
mixup improves the smoothness of the neural function by bounding the Lipschitz constant of the gradient function of the neural networks. 
We then propose that 
this can be strengthened by 
simultaneously constraining the
Lipschitz constant of the neural function itself through adversarial  Lipschitz  regularization, encouraging the neural function to  behave  linearly while also constraining  the  slope  of  this  linear function.  
On three benchmark data sets and one real-world biomedical data set, 
we demonstrate that 
this  combined  regularization 
results in improved generalization performance of SSL when learning from a small amount of labeled data. 
We further demonstrate the robustness of the presented method against single-step adversarial attacks. 
Our code is available at \href{https://github.com/Prasanna1991/Mixup-LR}{https://github.com/Prasanna1991/Mixup-LR}.
\end{abstract}

\begin{IEEEkeywords}
Mixup, Smoothness, Lipschitz regularization.
\end{IEEEkeywords}

\IEEEpeerreviewmaketitle

\section{Introduction}
\IEEEPARstart{D}{eep} Learning has been an increasingly common choice of 
data analyses across various domains. They 
have achieved strong performance when trained with a large set of well-annotated data.  However, the acquisition of such data sets is expensive in many domains
as the annotation requires expert knowledge \cite{gyawali2019semi}. In comparison, the collection of a large amount of data without any annotation, \textit{i.e.,} unlabeled data set, is often less costly. This surplus of unlabeled data 
can be exploited to benefit the learning from small
labeled data 
via semi-supervised learning (SSL). 

Formally, in SSL, a data set $\mathcal{X} = \{ x_{1},x_{2},...,x_{n}\}$ is given among which only the first $m$ points are annotated $\{y_{1},...,y_{m}\} \in \mathcal{Y}$, and the remaining points are unlabeled. While learning the function $f: \mathcal{X}\rightarrow \mathcal{Y}$, 
SSL will exploit the hidden relationship within the data 
to predict the labels of unlabeled data points. 
An important assumption 
commonly exploited 
is the 
\textit{smoothness} of the neural function
\cite{zhou2004learning}.
Generally, this 
can be loosely categorized into two groups: 
\textit{local} smoothness, 
and 
\textit{global smoothness}
\cite{zhou2004learning}.
As illustrated in Fig. (\ref{fig:smoothSSL}) on
a classic two-moon toy problem,  
local assumption 
regularizes outputs of nearby points 
 to have the same label. 
 This is represented by 
 various perturbation-based methods that 
 constrain the output of the neural function in the vicinity of available data points
\cite{laine2016temporal, miyato2018virtual}, 
which 
does not consider the connection between data points. 
Alternatively, 
global assumption 
regularizes outputs of the points of the same structure (\textit{e.g.,} any points on a single moon in \ref{fig:smoothSSL}), 
more fully utilizing 
the information in the unlabeled data structures. 
This is represented by various graph-based methods \cite{zhou2004learning,luo2018smooth}, where the similarity of data points is defined by graph and outputs of neural functions are smoothed for the graph structure. 


\begin{figure}[t]
\begin{center}
\includegraphics[scale=0.30]{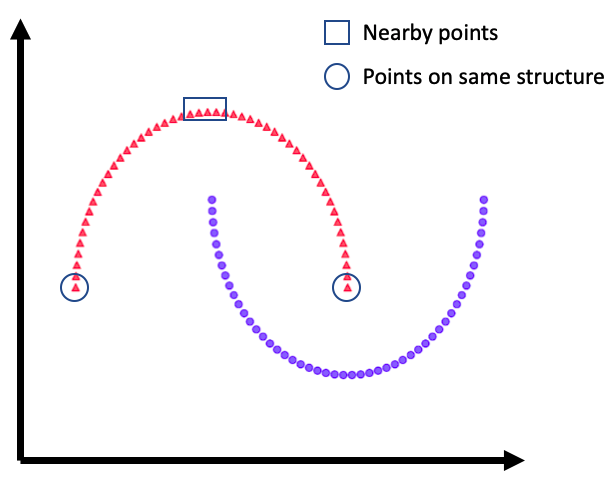}
\end{center}
\caption{The use of local and global smoothness in the classic two-moon toy SSL problem.} 
\label{fig:smoothSSL}
\end{figure}

More recently, the mixup regularizer \cite{zhang2017mixup}, initially proposed for supervised learning, has been applied to SSL and demonstrated state-of-the-art performance \cite{berthelot2019mixmatch, verma2019interpolation}.
Mixup trains a deep learning model on linear interpolants of inputs and labels. 
It has been considered as regularizing the global smoothness of the function by filling the void between input samples
and, in specific, has been interpreted 
to be 
encouraging 
a linear behavior of 
the underlying neural function when interpolating between training examples  \cite{zhang2017mixup}. 
Despite the empirical success obtained in many variants of the mixup strategy \cite{verma2018manifold,gyawali2020semi}, 
however, the theoretical underpinning of its regularization effect on the
the neural function has not been fully understood. 


Outside the regime of SSL, 
different learning theories for generalization have agreed that regularizing some notion of smoothness of the hypothesis class of the function helps improve generalization \cite{kawaguchi2017generalization}.
One increasingly popular approach 
to regularize the smoothness of the neural function, or to control the complexity of the function, 
is in
enforcing Lipschitz continuity of the deep network 
\cite{gouk2018regularisation, arjovsky2017wasserstein, asadi2018lipschitz}. 
This interest is particularly noticeable in the generative modeling community to improve the stability of GANs, where efficiently constraining the Lipschitz constant of the critic function is fundamental due to the nature of the underlying optimization problem (minimization of the Wasserstein distance between real and generated samples) \cite{arjovsky2017wasserstein}. 
The Lipschitz continuity of a function 
(see Definition \ref{lipschitz} for gradient function) 
essentially bounds the 
rate of changes in the function output as a result of the change in the input, preventing a function from changing steeply over its input space.
To enforce the Lipschitz continuity of a neural function (or to constrain the Lipschitz constant of a neural function),
several techniques have been proposed, such as weight clipping \cite{arjovsky2017wasserstein}, gradient penalty \cite{gulrajani2017improved}, and adversarial Lipschitz regularization \cite{terjek2019virtual}. 
Beside generative models, the Lipschitz continuity is also considered in several deep learning topics, including robust learning \cite{weng2018evaluating}, deep learning theory \cite{bartlett2017spectrally} and supervised learning \cite{ohn2019smooth}.
Despite these extensive efforts, the effect of Lipschitz continuity for SSL and 
its relation with
mixup-based regularization have received limited attention. 

\begin{figure}[t]
\begin{center}
\includegraphics[scale=0.18]{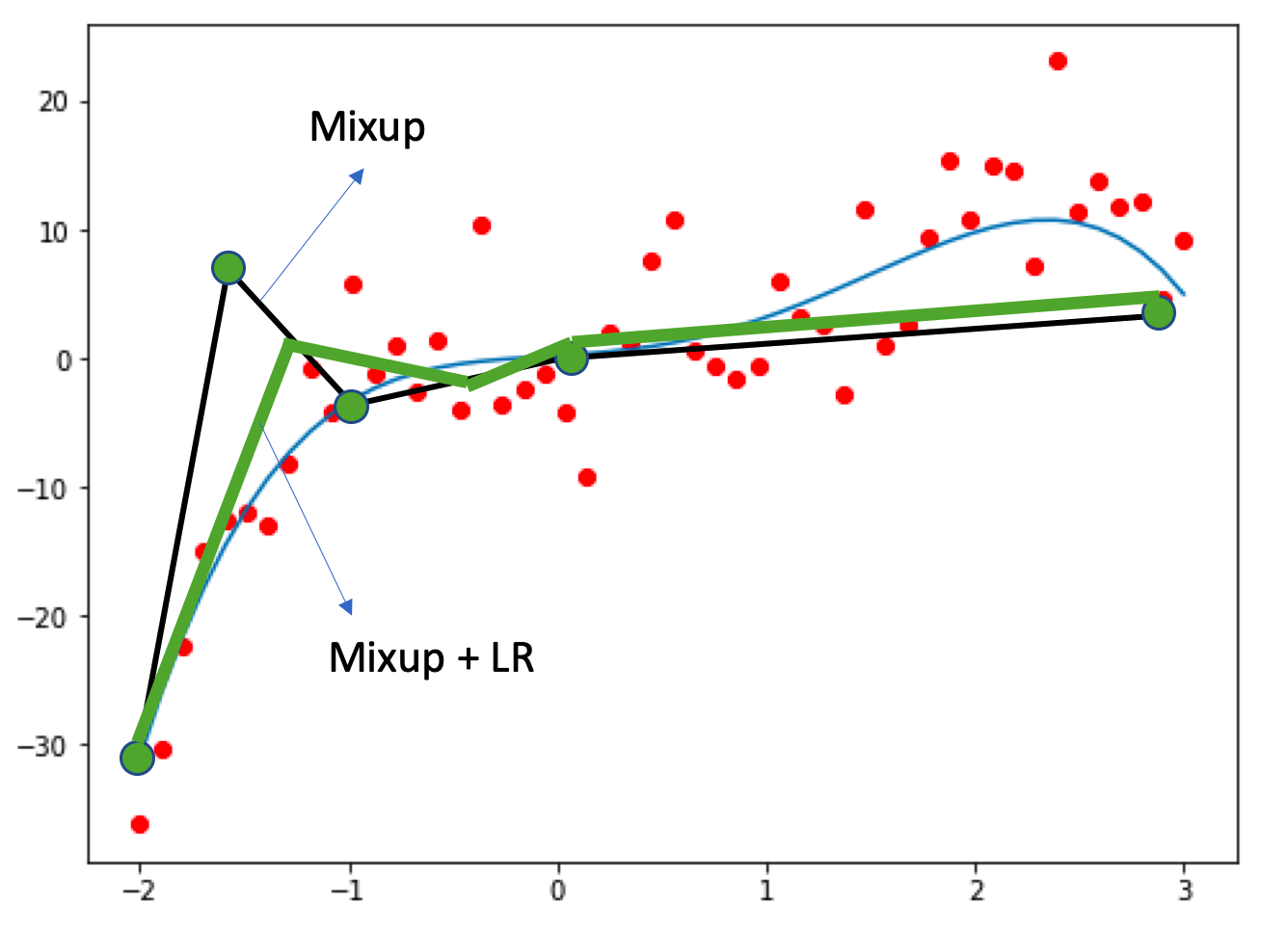}
\end{center}
\caption{Schematic of the mixup-based strategy for learning function (blue line) with limited labeled data (green dots). Mixup promotes linearity between the pair of data (black line) but does not bound the steepness of the slope. We add Lipschitz regularization to constrain the slope of such function (green line).}
\label{fig:mixup_lr}
\end{figure}

In this paper, 
we first offer a theoretically substantiated interpretation of 
the regularization effect of mixup from the lens of Lipschitz constant.
We show that, by promoting linearity, mixup minimizes the Lipschitz constant of the gradient function of the neural network, thereby enforcing the 
Lipschitz smoothness of the neural function 
(Proposition \ref{pro:1}). We then note that, 
while this minimizes the rate of change of the gradient of the neural function, it 
does not enforce the Lipschitz continuity of the neural function itself.  
Intuitively, 
this means that while the mixup encourages the function to behave linearly when interpolating, it does not bound or constrain the steepness of the slope of this linear function 
as illustrated in Fig. \ref{fig:mixup_lr}.
We therefore present a new SSL strategy that combines mix-up training with Lipshitz regularization (LR)
to simultaneously constraint the Lipschitz constant of both the neural function (through LR) and its gradient function (through mixup). 
This not only encourages the neural function to behave linearly, but also constrains the slope of this linear function as illustrated in Fig. \ref{fig:mixup_lr}. 
We hypothesize that this 
combined regularization will 
further smooth the neural function 
and 
result in improved generalization performance of SSL 
when learning from a small amount of labeled data. 


We test our hypothesis on three widely considered benchmark data sets (CIFAR-10 \cite{krizhevsky2009learning}, SVHN \cite{netzer2011reading}, and CIFAR-100 \cite{krizhevsky2009learning}) and a real world biomedical data set (Skin Lesion images \cite{codella2019skin,tschandl2018ham10000}).
We compare the performance of the presented method with standard SSL methods, including the state-of-the-art MixMatch \cite{berthelot2019mixmatch}, and demonstrate its improvement in generalization across all these datasets. 
We further investigate the robustness of the presented method to single-step adversarial attacks and demonstrate improved robustness.

In summary, the contribution of this work includes:
\begin{itemize}
    \item We establish the first connection between mixup-based regularization and the smoothness of the neural function via Lipschitz smoothness.
    \item We propose augmenting mixup-based approach with explicit Lipschitz regularization on the neural function to improve the generalization of SSL methods. 
    \item We demonstrated improved performance over the state-of-the-art methods in three benchmark datasets and a real-world biomedical dataset.
    \item We demonstrate improved robustness of the presented method to single-step adversarial attacks.
\end{itemize}

\section{Related Work}
Our work is generally related to two broad research topics of deep learning which we discuss separately below.
\subsection{Semi-supervised learning}
SSL has been extensively studied in machine learning. In recent times, with deep learning, SSL has seen tremendous success. One of the standard approaches of SSL algorithms is the application of consensus regularization. Toward this, perturbation around input data points (or its latent space) \cite{miyato2018virtual, laine2016temporal, sajjadi2016regularization, gyawali2019semi} have been considered. For instance, in $\Pi$-Model \cite{laine2016temporal, sajjadi2016regularization}, consistency-based regularization is applied on ensemble predictions obtained via techniques like random data augmentation, and network dropout.
On the other hand, virtual adversarial training (VAT) \cite{miyato2018virtual} maintains similar consistency by forcing predictions of different adversarially-perturbed inputs to be the same. However, by considering perturbations around single data points, these approaches regularize only the \textit{local} smoothness of the network function. Furthermore, it has been argued that such local perturbations would not fully utilize the information in the unlabeled data structure \cite{luo2018smooth}. 

For regularizing the global smoothness of the neural function, mostly graph-based methods have been considered \cite{zhou2004learning, luo2018smooth}.
These methods defined the similarity of the data points on the graph and smoothed outputs of the neural network for such graph structure. 
More recently, mixup regularizer \cite{zhang2017mixup} is argued to regularize the global smoothness of the neural function and, thus, help achieve generalization in SSL \cite{gyawali2020semi}.
Initially, mixup was proposed for improving generalization in supervised learning, demonstrating state-of-the-art results in the corresponding benchmarks \cite{zhang2017mixup}.
Later, mixup was extended to the SSL problems, such as MixMatch \cite{berthelot2019mixmatch}, where a even larger margin of improvements were obtained across many datasets.
Different variants of mixup have since been presented in the literature, such as mixing in both data and latent space for further improving generalization in supervised learning \cite{verma2018manifold} and SSL \cite{gyawali2020semi}. 
However, despite such empirical success, the theoretical underpinning of how mixup regularizes the neural function has not been fully understood.

Motivated by the performance of a mixup-based strategy, in this paper, we 
offer a theoretical insight of the regularization effect of mixup through the lens of Lipschitz constant and, based on which, identify a complementary improvement to improve mixup-based SSL.


\subsection{Lipschitz regularization}
The Lipschitz constant of the network has been proposed as a candidate measure for the Rademacher complexity (a measure of generalization) \cite{bartlett2017spectrally}.
Different Lipschitz regularizations for controlling Lipschitz constant have been considered in various topics of machine learning.
Lipschitz continuity is commonly considered for robust learning, avoiding adversarial attacks \cite{weng2018evaluating}, and stabilization to train generative adversarial networks \cite{arjovsky2017wasserstein}. 
To enforce the Lipschitz constraint, different implicit approaches like weight clipping \cite{arjovsky2017wasserstein} and gradient penalty \cite{gulrajani2017improved,wei2018improving} have been considered. 
These implicit approaches approximate the constraint on Lipschitz constant by, typically, 
penalizing the norm of the function gradient at certain input points \cite{gulrajani2017improved}.
An explicit approach to 
Lipschitz regularization, on the other hand, attempts to directly encourages Lipschitz continuity based on its definition, 
which was argued to provides more control over the regularization effect
\cite{terjek2019virtual}. 
In \cite{terjek2019virtual}, 
for instance, 
this was done by explicitly penalizing the violation of Lipschitz constraint.
Despite these extensive studies, 
however, the use of Lipschitz regularization 
has not been considered for improving generalization of SSL methods.

\section{Preliminaries}
In this section, we first briefly discuss some preliminaries required for the presented method in Section \ref{sec:method}.

\subsection{Preliminary I: Mixup Regularization}
Mixup \cite{zhang2017mixup} is a data-dependent regularization inspired by Vicinal Risk Minimization (VRM) principle \cite{chapelle2001vicinal} that encourages the model $f$ to behave linearly in-between training samples. Formally, the mixup produces virtual feature-target vector:
\begin{align}
    \nonumber
    \Tilde{x} & = \lambda x_{1} + (1 - \lambda) x_{2}, \\
    \nonumber
    \Tilde{y} & = \lambda y_{1} + (1 - \lambda) y_{2}
\end{align}
where ($x_{1}$, $y_{1}$) and ($x_{2}$, $y_{2}$) are two feature-target vectors drawn randomly from the training data and $\lambda \sim \text{Beta}(\alpha, \alpha) \in [0,1]$. 
This mixup is used to construct a virtual dataset $\mathcal{D}_{v}$ := $\{(\Tilde{x_{i}}, \Tilde{y_{i}})\}^{m}_{i=1}$ which is then used to train the network function $f$ by minimizing the loss value:
\begin{align}
\label{eq:basic_mixup}
    \mathcal{L}_{mixup} =  \frac{1}{m}\sum_{i}^{m} \ell (f(\Tilde{x_{i}}, \Tilde{y_{i}}))
\end{align}
In between the original feature-target pairs, the loss function $\mathcal{L} (f)$ encourages the network function $f$ to behave linearly:
\begin{align}
    f(\lambda x_{1} + (1 - \lambda) x_{2}) = \lambda f(x_{1}) + (1-\lambda)f(x_{2})
\end{align}


\subsection{Preliminary II: Lipschitz Regularization}
\label{sec:lip}
A general definition of the smallest Lipschitz constant K of a function $f:\mathcal{X}\rightarrow\mathcal{Y}$ is:
\begin{equation}
    \begin{aligned}
    \norm{f}_{K} = \underset{x_{1}, x_{2}\in\mathcal{X};x_{1}\neq x_{2}}{\text{sup}} \frac{d_{Y}(f(x_{1}),f(x_{2}))}{d_{X}(x_{1}, x_{2})}
\end{aligned}
\end{equation}
where the metric spaces $d_{X}$ and $d_{Y}$ are the domain and co-domain of the function $f$, respectively. The properties of low Lipschitz constants for deep networks are explored in \cite{oberman2018lipschitz}, demonstrating that it improves generalization. Recent literature in stabalizing GAN have proposed different approaches for Lipschitz regularization.
These regularizations can be generally grouped into an implicit and explicit form of penalization for violation of Lipschitz constraint. 
For instance, gradient penalty \cite{gulrajani2017improved}, an implicit Lipschitz regularization, penalizes the norm of the gradient as
\begin{equation}
    \begin{aligned}
\label{eq:gp_}
\mathbb{E}_{x\sim\mathcal{X}}(\norm{\triangledown_{x}f(x)}_{2} - 1)^{2}
\end{aligned}
\end{equation}
and Lipschitz penalty \cite{petzka2017regularization, terjek2019virtual}, an explicit regularization, penalizes the violation of the Lipschitz constraint as

\begin{equation}
    \begin{aligned}
\label{eq:gp}
\mathbb{E}_{x_{1}, x_{2}\sim\mathcal{X}}(\frac{|f(x_{1}) - f(x_{2})|}{\norm{x_{1} - x_{2}}_{2}} - 1)^{2}  
\end{aligned}
\end{equation}
In both case, the objective is to achieve 1-Lipschitz function for $f$, an optimization requirement for Wassterstein-GANs.  

\section{Methodology}
\label{sec:method}
We consider a mapping function $f: \mathcal{X}\rightarrow \mathcal{Y}$, approximated via a deep network.
As has been empirically shown, the given function $f$ achieves better generalization when trained with a mixup strategy \cite{zhang2017mixup}. In \ref{sec:mixup}, we first establish the theoretical connection between mixup and Lipschitz regularization (of the gradient function of the neural network). With this understanding, in \ref{sec:lipschitzReg}, we propose a complementary improvement of mixup by explicit Lipschitz regularization of the neural function via adversarial Lipschitz regularization. 
Finally, in \ref{sec:model}, we integrate these ideas into a new SSL method.

\subsection{Mixup bounds the Lipschitz constant of the gradient of the neural function}
\label{sec:mixup}
To understand the role of mixup in regularizing the 
\textit{smoothness} of the neural function, 
we consider the definition of Lipschitz smoothness. 
\begin{definition}{(Lipschitz Smoothness).}
\label{lipschitz}
A differential function $f:\mathcal{X}\rightarrow\mathcal{Y}$
is Lipschitz smooth with constant $L > 0$ if its derivatives are Lipschitz continuous:
\begin{align*}
   \norm{\bigtriangledown f(x_{1}) - \bigtriangledown f(x_{2})} \leq L \norm{x_{1} - x_{2}}, \quad  \forall x_{1}, x_{2} \in \mathcal{X}.
\end{align*}
\end{definition}
Loosely speaking, when the gradient of a function is Lipschitz continuous, such function are considered to be smooth. 

We now show that mixup regularization is a lower bound of the Lipschitz constant of the gradient of the neural network.
\begin{proposition}
\label{pro:1}
Let $f$ be the differential function with a Lipschitz continuous gradient over $\mathbb{R}^n$ with constant $L$. This function, $f$, via mixup regularizer \cite{zhang2017mixup}, is encouraged toward convexity such that $f(\lambda x_{1} + (1 - \lambda) x_{2}) = \lambda f(x_{1}) + (1-\lambda)f(x_{2})$ with $\lambda \in [0,1]$. 
Then, $    f(\alpha x_{1} + (1 - \alpha) x_{2}) - (\alpha f(x_{1}) + (1-\alpha)f(x_{2})) \leq \frac{\alpha (1 - \alpha) L}{2} \norm{x_{1} - x_{2}}^2$.

\end{proposition}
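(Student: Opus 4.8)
The plan is to reduce the statement to the classical quadratic estimate satisfied by any function with an $L$-Lipschitz gradient, namely that
\[
f(y) \;\ge\; f(x) + \langle \nabla f(x),\, y - x\rangle - \tfrac{L}{2}\norm{y-x}^2
\qquad \forall\, x,y \in \mathbb{R}^n .
\]
First I would establish this estimate directly from the hypothesis of Definition~\ref{lipschitz}: by the fundamental theorem of calculus, $f(y) - f(x) - \langle \nabla f(x), y-x\rangle = \int_0^1 \langle \nabla f(x + t(y-x)) - \nabla f(x),\, y-x\rangle\, dt$, and bounding the integrand from below via Cauchy--Schwarz together with $\norm{\nabla f(x+t(y-x)) - \nabla f(x)} \le Lt\,\norm{y-x}$ yields a lower bound of $-\int_0^1 Lt\,\norm{y-x}^2\,dt = -\tfrac{L}{2}\norm{y-x}^2$. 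Note that only the Lipschitz-gradient assumption is used here; the convexity/linearity that mixup encourages is what makes the left-hand side of the proposition nonnegative (and hence the bound meaningful), but it plays no role in the upper bound itself.

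Next I would introduce the interpolation point $x_\alpha := \alpha x_1 + (1-\alpha)x_2$ and apply the estimate twice, both times with anchor $x = x_\alpha$: once with $y = x_1$, for which $x_1 - x_\alpha = (1-\alpha)(x_1 - x_2)$, and once with $y = x_2$, for which $x_2 - x_\alpha = -\alpha(x_1 - x_2)$. This produces
\[
f(x_1) \ge f(x_\alpha) + \langle \nabla f(x_\alpha),\, x_1 - x_\alpha\rangle - \tfrac{L}{2}(1-\alpha)^2\norm{x_1-x_2}^2,
\]
\[
f(x_2) \ge f(x_\alpha) + \langle \nabla f(x_\alpha),\, x_2 - x_\alpha\rangle - \tfrac{L}{2}\alpha^2\norm{x_1-x_2}^2 .
\]

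Finally, assuming $\alpha \in [0,1]$, I would take the convex combination $\alpha\cdot(\text{first}) + (1-\alpha)\cdot(\text{second})$. The $f(x_\alpha)$ terms add up to $f(x_\alpha)$; the inner-product terms combine to $\langle \nabla f(x_\alpha),\, \alpha(x_1-x_\alpha) + (1-\alpha)(x_2-x_\alpha)\rangle = \langle \nabla f(x_\alpha),\, x_\alpha - x_\alpha\rangle = 0$; and the quadratic remainder coefficient becomes $\alpha(1-\alpha)^2 + (1-\alpha)\alpha^2 = \alpha(1-\alpha)$. Hence $\alpha f(x_1) + (1-\alpha)f(x_2) \ge f(x_\alpha) - \tfrac{\alpha(1-\alpha)L}{2}\norm{x_1-x_2}^2$, and rearranging gives exactly the claimed inequality. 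The argument is elementary throughout; the only point needing care is the bookkeeping of the signs of $x_1 - x_\alpha$ and $x_2 - x_\alpha$ so that the gradient term cancels and the remainder coefficients collapse to $\alpha(1-\alpha)$ --- that cancellation, rather than any real difficulty, is the heart of the proof.
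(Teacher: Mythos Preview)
Your argument is correct: the quadratic lower bound (the ``descent lemma'') holds for any function with $L$-Lipschitz gradient, and applying it twice with anchor $x_\alpha$ and taking the convex combination yields the claim with the gradient terms cancelling and the remainder coefficient collapsing to $\alpha(1-\alpha)$ exactly as you describe. You are also right that convexity of $f$ is not needed for the upper bound; it only makes the left-hand side nonpositive.

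The paper takes a different route. Instead of the descent lemma, it introduces the auxiliary function $g(x)=\tfrac{L}{2}x^{T}x - f(x)$, shows that $g$ has a monotone gradient (hence is convex) using the Cauchy--Schwarz consequence $(\nabla f(x_1)-\nabla f(x_2))^{T}(x_1-x_2)\le L\norm{x_1-x_2}^2$, and then expands Jensen's inequality $g(x_\alpha)\le \alpha g(x_1)+(1-\alpha)g(x_2)$; the identity $\alpha\norm{x_1}^2+(1-\alpha)\norm{x_2}^2-\norm{x_\alpha}^2=\alpha(1-\alpha)\norm{x_1-x_2}^2$ produces the right-hand side. So where you linearize $f$ at $x_\alpha$ and combine two first-order estimates, the paper packages the same information into convexity of a single quadratic-plus-$f$ surrogate and applies Jensen once. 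Your approach is a bit more elementary and sidesteps both the auxiliary function and the paper's (unnecessary) invocation of convexity of $f$. It is worth noting that with the sign chosen in the paper, convexity of $g=\tfrac{L}{2}\norm{\cdot}^2 - f$ literally yields the reverse inequality $\alpha f(x_1)+(1-\alpha)f(x_2)-f(x_\alpha)\le \tfrac{\alpha(1-\alpha)L}{2}\norm{x_1-x_2}^2$; the direction stated in the proposition follows from the same argument applied to $\tfrac{L}{2}\norm{\cdot}^2 + f$. Your derivation avoids this sign pitfall entirely.
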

\begin{proof}
With the definition of Lipschitz smoothness, for a differential function $f:\mathcal{X}\rightarrow\mathbb{R}$, $\mathcal{X} \subset \mathbb{R}^{n}$, and a constant $L$, we have,
\begin{align}
       \norm{\bigtriangledown f(x_{1}) - \bigtriangledown f(x_{2})} \leq L \norm{x_{1} - x_{2}}, \quad  \forall x_{1}, x_{2} \in \mathcal{X}.
\end{align}
Note that this definition does not assume convexity of $f$. But when we assume that the function $f$ is convex, then using Cauchy-Schwartz inequality, we have equivalent condition as: 
\begin{align}
\label{eq:cauchy}
       (\bigtriangledown f(x_{1}) - \bigtriangledown
       f(x_{2}))^{T}(x_{1}-x_{2}) \leq L \norm{x_{1} - x_{2}}^{2}, \notag\\ \quad  \forall x_{1}, x_{2} \in \mathcal{X}.
\end{align}
Similarly, for the convex function $f$, using monotonicity of gradient equivalence, we have following condition:
\begin{align}
\label{eq:mono}
       (\bigtriangledown f(x_{1}) - \bigtriangledown f(x_{2}))^{T}(x_{1}-x_{2}) \geq 0, \quad  \forall x_{1}, x_{2} \in \mathcal{X}.
\end{align}
Now, let us consider the function
\begin{align}
\label{eq:g}
    g(x) = \frac{L}{2}x^{T}x - f(x)
\end{align}
Using (\ref{eq:cauchy}) and (\ref{eq:mono}), we first establish that $g$ is convex. We apply $x_{1}$ and $x_{2}$ to $g$, take the derivative, and subtract the two result to get:
\begin{equation}
\label{eq:gToConvex}
    \begin{aligned}
     \bigtriangledown g(x_{1}) & - \bigtriangledown g(x_{2})  = Lx_{1} -  \bigtriangledown f(x_{1}) - Lx_{2} + \bigtriangledown f(x_{2}) \\
     \bigtriangledown g(x_{1}) & - \bigtriangledown g(x_{2})   = (Lx_{1} - Lx_{2} - (\bigtriangledown f(x_{1}) - \bigtriangledown f(x_{2}))) \\
     (\bigtriangledown g(x_{1}) & - \bigtriangledown g(x_{2}))^{T}(x_{1} - x_{2}) \\  & = (L(x_{1} - x_{2})  - (\bigtriangledown f(x_{1}) - \bigtriangledown f(x_{2})))^{T}(x_{1} - x_{2}) \\
     & = L \norm{x_{1} - x_{2}}^{2}) - (\bigtriangledown f(x_{1}) - \bigtriangledown f(x_{2}))^{T}(x_{1} - x_{2}) \\
     & \geq 0
\end{aligned}
\end{equation}
This is equivalent to the convex form of (\ref{eq:mono}), and hence $g$ is convex. Now, when we expand $g$ according to the \textit{standard} definition of convexity, we get:
\begin{equation}
\label{eq:final}
    \begin{aligned}
    f(\alpha x_{1} + (1 - \alpha) x_{2}) - & (\alpha f(x_{1}) + (1-\alpha)f(x_{2})) \\ & \leq \frac{\alpha (1 - \alpha) L}{2} \norm{x_{1} - x_{2}}^2
\end{aligned}
\end{equation}
where the LHS of (\ref{eq:final}) is the minimization of mixup loss and we finished the proof.
\end{proof}

Note that only the constant $L$ on the right-hand side is subject to change during optimization for a given pair of data $x_{1}$ and $x_{2}$. 
As such, this proposition implies that minimizing mixup loss controls the constant $L$ which can be considered as the Lipschitz constant of the gradient function of 
the neural network, thereby making the function smoother. 
Establishing this theoretically-substantiated connection between mixup and Lipschitz regularization is the first contribution of this work.


\subsection{Bounding the Lipschitz constant of the neural function by adversarial Lipschitz regularization}
\label{sec:lipschitzReg}

As illustrated in Fig. \ref{fig:mixup_lr}
when regularizing the neural function to interpolate linearly with mixup provides the smoothest possible function among possible choices, it does not 
constrain the steepness of the slope of the linear function. Therefore,
in this section, we propose to 
augment mixup strategy with 
an explicit Lipschitz regularization to bound the Lipschitz constant of the neural function itself. 
In specific, 
we consider 
penalizing the violation of Lipschitz constraint as:
\begin{equation}
    \begin{aligned}
    \label{eq:lp}
    \mathcal{L}_{LP} =  \Big(\frac{d_{Y}(f(x_{1}),f(x_{2}))}{d_{X}(x_{1}, x_{2})} - \gamma)
\end{aligned}
\end{equation}
where $d_{X}$ and $d_{Y}$ are the metric for input and output space, respectively.
Note that we put $\gamma$ = 0 because, unlike the GAN setup, we are not required to obtain 1-Lipschitz function, as shown in Eq. (\ref{eq:gp}).

The direct implementation of Eq.(\ref{eq:lp}) is not trivial, 
partly due to the sampling strategy of 
the training pairs of $x_{1}$ and $x_{2}$. 
We follow the adversarial Lipschitz regularization strategy presented in \cite{terjek2019virtual} 
that penalizes Eq.(\ref{eq:lp}) 
on a pair of data points that 
maximizes the Lipschitz ratio. 
In specific, we first select the data point $x_{2}$ to be in the vicinity of the training point $x_{1}$ such that $x_{2}$ = $x_{1} + r$:
\begin{equation}
    \begin{aligned}
    \norm{f}_{K} = \underset{x_{1}, x_{1}+r\in\mathcal{X};x_{1}\neq x_{2}}{\text{sup}} \frac{d_{Y}(f(x_{1}),f(x_{1}+r))}{d_{X}(x_{1}, x_{1}+r)}
\end{aligned}
\end{equation}
where the mapping $f$ is $K$-Lipschitz if taking maximum over $r$ results in value $K$ or smaller.
Toward this, we define $r$ by finding the adversarial perturbation that maximizes the Lipschitz ratio for the given $x_{1}$ as:
\begin{equation}
    \begin{aligned}
    r_{adv} = \underset{x_{1}, x_{1}+r\in\mathcal{X};x_{1}\neq x_{2}}{\text{arg max}} \frac{d_{Y}(f(x_{1}),f(x_{1}+r))}{d_{X}(x_{1}, x_{1}+r)}
\end{aligned}
\end{equation}
and penalize the corresponding maximum violation of the Lipschitz constraint  as:
\begin{equation}
    \begin{aligned}
    \label{eq:alp}
    \mathcal{L}_{ALP} =  \Big(\frac{d_{Y}(f(x_{1}),f(x_{1} + r_{adv}))}{d_{X}(x_{1}, x_{1} + r_{adv})} - \gamma)
\end{aligned}
\end{equation}
Since computing adversarial perturbation is a nonlinear optimization problem, we followed a crude and cheap power iteration based approximation approach similar to works in \cite{miyato2018virtual, terjek2019virtual}.
In this iterative scheme, we approximate the direction at $x_{1}$ that induces the largest change in the output in terms of divergence $d_{Y}$.   

Combining the mixup loss of Eq. (\ref{eq:basic_mixup}) and ALR of Eq. (\ref{eq:alp}), we obtain our proposed loss function:
\begin{equation}
    \begin{aligned}
    \label{eq:combined}
    \mathcal{L} = \mathcal{L}_{mixup} + \zeta \cdot \mathcal{L}_{ALP}
\end{aligned}
\end{equation}
where $\zeta$ controls the relative strength of explicit Lipschitz regularization. 

\subsection{Integrating mixup and explicit Lipschitz regularization for SSL}
\label{sec:model}
In this section, we apply the presented combination of loss (Eq. \ref{eq:combined}) for SSL setup. Toward this, we consider a data set $\mathcal{X} = \{ x_{1},x_{2},...,x_{n}\}$ among which only the $m$ points are annotated with labels $\{y_{1},...,y_{m}\} \in \mathcal{L}$, and the remaining points are unlabeled. We aim to learn parameters $\theta$ for the mapping function $f: \mathcal{X}\rightarrow \mathcal{Y}$, approximated via a deep network. 

Similar to MixMatch algorithm \cite{bartlett2017spectrally}, along the course of training, we first \textit{guess} and continuously update the labels for unlabeled data points. We augment $P$ separate copies of unlabeled data batch $u_{b}$, and compute the average of the model's prediction as:
\begin{equation}
  \label{eq:labelEstimation}
q_b = \frac{1}{P}\sum_{p=1}^{P} f(u_{b, p}; \theta)
\end{equation}
Note that the label guessing in this manner also regularize the model toward consistency 
in a similar fashion to 
typical perturbation-based approaches, 
as the data transformations (\textit{e.g.}, rotation, translation, etc.) are assumed to leave class semantics unaffected. 

While generating a guessed label, we sharpen the obtained labels to minimize the entropy in our estimation. Entropy minimization is a traditional and successful strategy in the SSL to enforce the classifier output to have low-entropy predictions on unlabeled data \cite{berthelot2019mixmatch, miyato2018virtual}. For the sharpening function, we use the following operation:
\begin{equation}
  \label{eq:sharpening}
  \text{Sharpen} (q_b,\tau)_{i} := q_{b_{i}}^{\frac{1}{\tau}} / \sum_{j=1}^{S} q_{b_{j}}^{\frac{1}{\tau}}
\end{equation}
where $S$ represents the number of classes in the output space, and $\tau$ is the temperature hyperparameter for the categorical distribution. However, note that such sharpening is not feasible in a multi-label classification scenario \cite{gyawali2020semi}. 

We then use this sharpened guessed labels for unlabeled data points, and ground truth labels for labeled data points to train the network using mixup strategy. In each batch, we mix both labeled and unlabeled data points together to ensure that the mixed data fairly represent the distribution of both labeled and unlabeled data. 
\begin{equation}
  \label{eq:mixMatch}
  \begin{aligned}
  \lambda & \sim \text{Beta}(\alpha, \alpha)\\
  \lambda' & = \text{max}(\lambda, 1-\lambda) \\
  \Tilde{x} & = \lambda' \mathbf{x}_{1} + (1 - \lambda') \mathbf{x}_{1} \\
  \Tilde{y} &= \lambda' \mathbf{y}_{2} + (1 - \lambda') \mathbf{y}_{2}
  \end{aligned}
\end{equation} 

It is reasonable to expect that the actual labels in labeled data are more reliable than guessed labels for unlabeled data, which 
motivates us to use different loss functions for labeled and unlabeled data points. Since we mixed them together, we use $\lambda' = \text{max}(\lambda, 1-\lambda)$ in Eq.(\ref{eq:mixMatch}) to ensure $\Tilde{x}$ is closer to $x_{1}$ than $x_{2}$: this knowledge then allow us to apply labeled and unlabeled loss according to the index of $x_{1}$. 
For data points in a batch $\mathcal{B}$  
that are closer to labeled data, 
we apply following supervised loss term:
\begin{equation}
  \label{eq:xloss}
  \begin{aligned}
\mathcal{L}_{\mathcal{X}} = 
\underset{\mathcal{B}}{\sum} \ell (f(\Tilde{x}), \Tilde{y})
  \end{aligned}
\end{equation} 
For data points in $\mathcal{B}$  
that are closer to unlabeled data, 
we apply $L_{2}$ loss as it is considered to be less sensitive to 
incorrect predictions:
\begin{equation}
  \label{eq:uloss}
  \begin{aligned}
\mathcal{L}_{\mathcal{U}} = 
\underset{\mathcal{B}}{\sum}  \norm{f(\Tilde{x}) - \Tilde{y}}_{2}^{2} 
  \end{aligned}
\end{equation} 
Finally, we combined ALP loss as defined in Eq. (\ref{eq:alp}) with the mixup-based loss terms of Eq. (\ref{eq:xloss}) and Eq. (\ref{eq:uloss}) as:
\begin{equation}
  \label{eq:final_loss}
  \begin{aligned}
\mathcal{L} = 
\underset{\text{mixup loss}}{\underbrace{\mathcal{L}_{\mathcal{X}} + \lambda \cdot \mathcal{L}_{\mathcal{U}}}} + \underset{\text{Lipschitz regularization}}{\underbrace{\zeta \cdot \mathcal{L}_{ALP}}}
  \end{aligned}
\end{equation}
where $\lambda_{\mathcal{U}}$ is the weight term for the unsupervised loss, and $\zeta$ is the weight term for the explicit Lipschitz regularization presented in \ref{sec:lipschitzReg}. We refer to the model trained in this manner as Mixup-LR throughout the rest of the manuscript for brevity.




\section{Experiments}
We test the effectiveness of the presented Mixup-LR on three standard SSL benchmark datasets (CIFAR-10 \cite{krizhevsky2009learning}, SVHN \cite{netzer2011reading} and CIFAR-100 \cite{krizhevsky2009learning}) and a real-world biomedical dataset (Skin Lesion images \cite{codella2019skin,tschandl2018ham10000}). We also consider the robustness of the presented Mixup-LR against adversarial attacks (section \ref{sec:robustness}). 

\subsection{Implementation details}
In all standard SSL benchmark experiments, we use the Wide ResNet-28 model from \cite{oliver2018realistic}, and for the biomedical dataset, we use the AlexNet model from \cite{gyawali2020semi}. Our implementation of the model and training procedure closely matches that of \cite{berthelot2019mixmatch}. For benchmark data sets, we follow modern standards in SSL and report the median error rate of the last 20 checkpoints on all the unlabeled data points, and on the biomedical dataset, we follow the classic approach and report the result on test data by choosing the checkpoint with the lowest validation error. In all experiments, we linearly ramp up $\lambda$ to its maximum value over the training steps.
We set $\zeta$ hyperparameter to 2 in all the cases, and consider only 1 iteration to calculate adversarial perturbation $r_{adv}$. 
Given the diversity of data sets considered, we leave other specific implementation details to each subsection. 

For comparison, we consider three existing SSL methods from \cite{berthelot2019mixmatch}: $\Pi$-Model \cite{laine2016temporal, sajjadi2016regularization}, Virtual Adversarial Training \cite{miyato2018virtual}, and MixMatch \cite{berthelot2019mixmatch}. The first two methods represent SSL considering local smoothness, and MixMatch represents SSL with global smoothness via a mixup strategy. Since MixMatch inspires the presented Mixup-LR, we re-implemented MixMatch in the same codebase to ensure a fair comparison. Furthermore, we trained each model on five random seeds and reported the mean and standard deviation of the error rates. 

\begin{table}[t]
\centering
    \begin{tabular}[t]{l|ccc}
    \hline
    Methods/Labels & 250 & 1000 & 4000\\
    \hline
    $\Pi$-Model \cite{berthelot2019mixmatch} & 53.02 $\pm$ 2.05 & 31.53 $\pm$ 0.09 & 17.41 $\pm$ 0.37 \\
    VAT \cite{berthelot2019mixmatch} & 36.03 $\pm$ 2.82 & 18.68 $\pm$ 0.40 & 11.05 $\pm$ 0.31 \\
    MixMatch \cite{berthelot2019mixmatch} & 11.08 $\pm$ 0.87 &7.75 $\pm$ 0.32 & 6.24 $\pm$ 0.06 \\
    MixMatch (ours) & 12.90 $\pm$ 2.10 & 8.73 $\pm$ 0.29 & 6.29 $\pm$ 0.11\\
    Mixup-LR & \textbf{9.47} $\pm$ 0.99 & \textbf{7.59} $\pm$ 1.64& \textbf{5.44} $\pm$ 0.06\\
    \hline
    \end{tabular} 
    \caption{Error rate (\%) comparison of Mixup-LR to baseline methods on CIFAR10 for a varying number of labels.
    }
\label{tab:cifar10}
\end{table}

\begin{table}[t]
\centering
    \begin{tabular}[t]{l|ccc}
    \hline
    Methods/Labels & 250 & 1000 & 4000\\
    \hline
    $\Pi$-Model \cite{berthelot2019mixmatch} & 17.65 $\pm$ 0.27 & 8.60 $\pm$ 0.18 & 5.57 $\pm$ 0.14 \\
    VAT \cite{berthelot2019mixmatch} & 8.41 $\pm$ 1.01 & 5.98 $\pm$ 0.21 & 4.20 $\pm$ 0.15 \\
    MixMatch \cite{berthelot2019mixmatch} & 3.78 $\pm$ 0.26 & 3.27 $\pm$ 0.31 & 2.89 $\pm$ 0.06 \\
    MixMatch (ours) & 3.65 $\pm$ 0.25 & 3.26 $\pm$ 0.13 & 2.87 $\pm$ 0.05\\
    Mixup-LR & \textbf{3.58} $\pm$ 0.30 & \textbf{3.09} $\pm$ 0.13 & \textbf{2.81} $\pm$ 0.04\\
    \hline
    \end{tabular} 
    \caption{Error rate (\%) comparison of Mixup-LR to baseline methods on SVHN for a varying number of labels.
    }
\label{tab:svhn}
\end{table}

\subsection{Generalization performance of SSL}
\label{sec:ssl}

\subsubsection{CIFAR-10}
CIFAR-10 is the standard SSL benchmark datasets with 60000 data points divided uniformly across ten labels. We evaluate the accuracy of each method considered with a varying number of labeled examples (250, 1000, and 4000) on all the unlabeled data sets. This means for the case of a labeled number of 250, we report the performance on the rest of the 59750 unlabeled samples. The $\lambda$ value was set to 75. We present the results on Table \ref{tab:cifar10}. As compared with the perturbation-based approach ($\Pi$-Model and VAT), the mixup strategy (via MixMatch) achieves better generalization throughout all the cases. By augmenting the mixup strategy with explicit Lipschitz regularization, the presented Mixup-LR further improves the generalization performance 
($p<0.02$ for $l=250$ and $l=4000$, unpaired \textit{t}-test). For instance, for the case of labeled number of 250, the presented Mixup-LR reduces the mean error rate by nearly 25\%. 

\subsubsection{SVHN}
SVHN consists of 73257 samples divided across ten labels. Similar to CIFAR-10, we evaluate the accuracy of each method considered  with varying numbers of labeled examples (250, 1000, and 4000). The $\lambda$ value was set to 250. The obtained results are presented in Table \ref{tab:svhn}. Similar to CIFAR-10, the presented Mixup-LR achieves better generalization across all the labeled training setup compared to both the perturbation-based approaches ($\Pi$-Model and VAT) and mixup-based approach (MixMatch, $p = 0.04$ for $l=4000$, $p\approx 0.20$ for $l=250$ and $l=1000$, unpaired \textit{t}-test). 

\begin{table}[t]
\centering
    \begin{tabular}[t]{l|c|c}
    \hline
    Methods/Labels & 10000 & 15000\\
    \hline
    MixMatch & 31.67 $\pm$ 0.30 & 28.50 $\pm$ 0.11 \\
    Mixup-LR & \textbf{29.11} $\pm$ 0.10 &  \textbf{26.21} $\pm$ 0.15\\
    \hline
    \end{tabular} 
    \caption{Error rate (\%) comparison of Mixup-LR to MixMatch on CIFAR100 for a varying number of labels.
    }
\label{tab:cifar100}
\end{table}

\begin{table}[t]
\centering
    \begin{tabular}[t]{l|c|c}
    \hline
    Methods/Labels & 600 & 1200\\
    \hline
    Supervised baseline & 0.80 $\pm$ 1.70 & 0.85 $\pm$ 1.06 \\
    MixMatch & 0.88 $\pm$ 1.21 & 0.89 $\pm$ 0.47 \\
    Mixup-LR & \textbf{0.89} $\pm$ 0.73 & \textbf{0.90} $\pm$ 0.66 \\
    \hline
    \end{tabular} 
    \caption{AUROC comparison of Mixup-LR to MixMatch on Skin Lesion dataset for a varying number of labels.
    }
\label{tab:skin}
\end{table}

\subsubsection{CIFAR-100}
CIFAR-100 is similar to CIFAR-10 except that it has 100 classes containing 600 images each. The $\lambda$ value was set to 250. Note that due to the increased complexity of the dataset, some prior works \cite{berthelot2019mixmatch} have suggested the use of a larger model (26 million parameter) instead of the base model considered in this work (1.5 million parameter). As such, the presented results might confound with results previously reported in the literature. Thus, we have only considered the evaluation of the baseline MixMatch and presented Mixup-LR on two different numbers of labeled examples (10000 and 15000). As shown in Table \ref{tab:cifar100}, the presented Mixup-LR significantly improved the performance compared to MixMatch in both cases ($p<0.0001$, unpaired \textit{t}-test)): the mean error rate for MixMatch was reduced by around 9\% with the presented Mixup-LR. 

\subsubsection{Skin Lesion}
ISIC 2018 skin data set comprises of 10015 dermoscopic images with labels for seven different disease categories. To evaluate the presented method, we created two sets of labeled training data (600 and 1200) considering the class balance. Similar to CIFAR-100, we compared the presented Mixup-LR against the baseline MixMatch. In this case, the $\lambda$ value was set to 50. Unlike other cases, to maintain the standards of the dataset, we report the AUROC. 
Since this biomedical dataset is relatively less considered in the SSL literature, we also included the supervised baseline. The obtained results are shown in Table \ref{tab:skin}, where, similar to other datasets, the presented Mixup-ALR achieves better generalization compared to the MixMatch approach, and the supervised baseline. 
The receiver operating characteristic (ROC) curves for the corresponding labeled number of 600 and 1200 are respectively presented in Fig. \ref{fig:roc_1} and Fig. \ref{fig:roc_2}. In these figures, we randomly selected the model for demonstrating the ROC comparison among the five random seed models.

\begin{figure}[t]
\begin{center}
\includegraphics[scale=0.35]{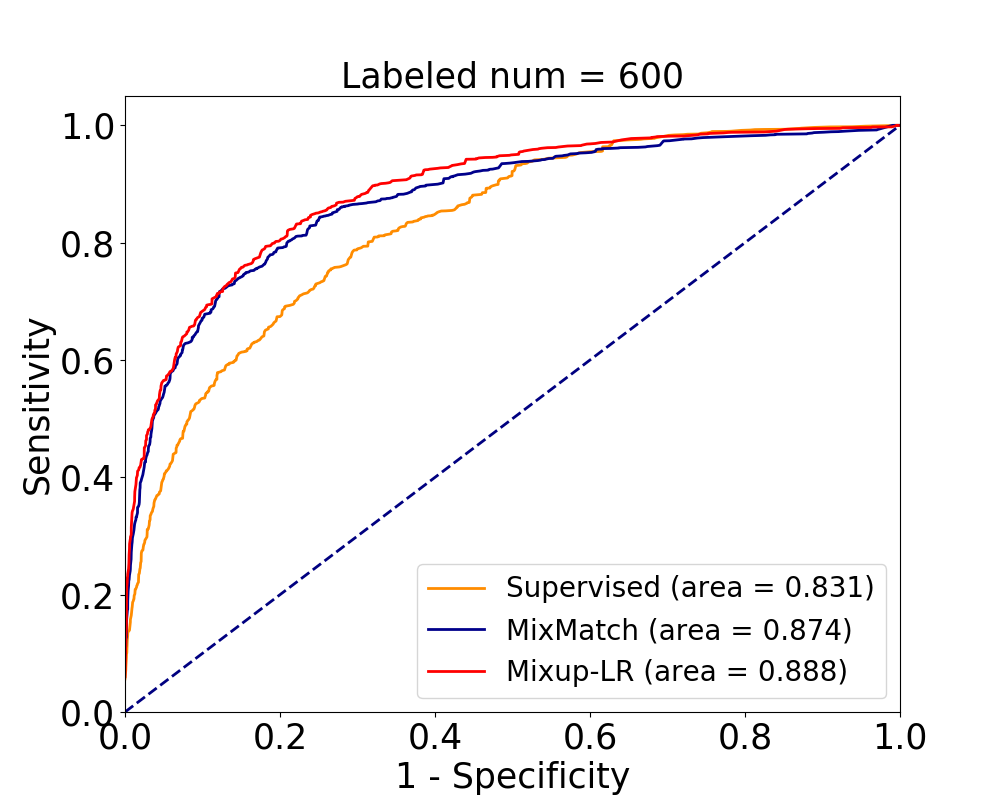}
\end{center}
\caption{ROC curves of the presented Mixup-LR compared to alternative models for classification of dermoscopic images into seven different disease categories when trained with 600 labels.} 
\label{fig:roc_1}
\end{figure}

\begin{figure}[t]
\begin{center}
\includegraphics[scale=0.35]{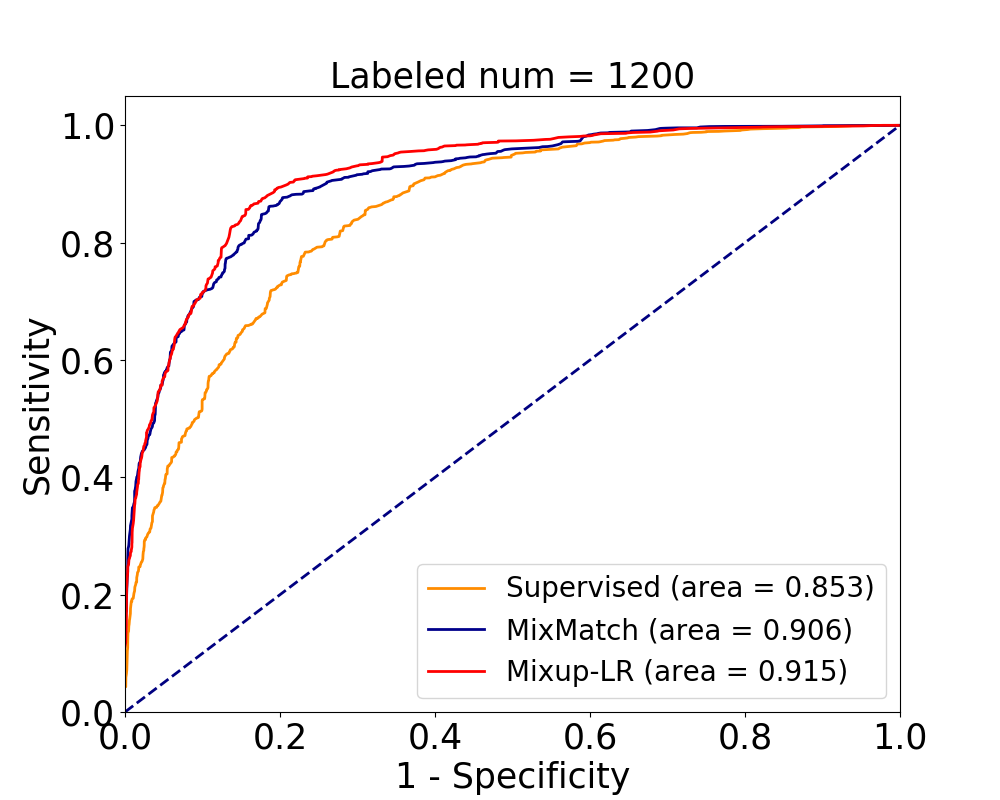}
\end{center}
\caption{ROC curves of the presented Mixup-LR compared to alternative models for classification of dermoscopic images into seven different disease categories when trained using 1200 labels.} 
\label{fig:roc_2}
\end{figure}

\subsubsection{Ablation study} 
Here, we primarily study the effect of $\zeta$ hyperparameter in the presented SSL method, using a labeled dataset of size 250 for the CIFAR-10 data set. The $\zeta$ hyperparameter controls the effect of the presented regularization with the mixup-based loss, as shown in Eq. (\ref{eq:final_loss}). 
We conducted this study for four different values (0, 1, 2, and 3), as shown in Table. \ref{tab:ablation}. While the inclusion of the Lipschitz penalty improves upon the baseline method (\emph{i.e.,} $\zeta$=0 \textit{vs.} rest), different hyperparameter values of $\zeta$ produce a similar result with the best case at $\zeta$=2. 
We also experiment to understand the effect of the quality of adversarial perturbation on the presented loss. 
Toward this, we increase the iteration number to 2 and for a single seed experiment, obtain the median accuracy as 88.42 on CIFAR-10 (250 labels) compared to 90.53 $\pm$ 0.99 with doing a single iteration. 
This shows that the single iteration for calculating adversarial perturbation for the Lipsthiz penalty is reasonable enough in these SSL problems corroborating with the previous discussion in the literature \cite{miyato2018virtual, terjek2019virtual}.
However, increasing the iteration beyond two may further improve the performance of the presented method.

\begin{table}[t]
\centering
    \begin{tabular}[t]{l|c}
    \hline
    Ablation & CIFAR-10 (250 labels)\\
    \hline
    $\zeta$ = 0 & 12.90 $\pm$ 2.10 \\
    $\zeta$ = 1 & 10.03 $\pm$ 0.63 \\
    $\zeta$ = 2 & \textbf{9.47} $\pm$ 0.99 \\
    $\zeta$ = 3 & 9.77 $\pm$ 0.58 \\
    \hline
    \end{tabular} 
    \caption{Ablation study on $\zeta$ hyperparameter.
    }
\label{tab:ablation}
\end{table}

\subsection{Robustness}
\label{sec:robustness}
The neural network vulnerability to the adversarial examples 
is well-known phenomena \cite{szegedy2013intriguing, goodfellow2014explaining}. We hypothesize that compared to the mixup-based approach, our presented Mixup-LR is more robust against the adversarial attacks for two reasons. First, Mixup-LR is Lipschitz regularized using adversarial samples, making them less insensitive to similar adversarial perturbations. Second, the Lipschitz penalty of Mixup-LR enforces local Lipschitzness in the classifier function. Recent works have theoretically demonstrated that regularizing the local Lipschitzness of the classifier function helps in achieving high clean and robust accuracy \cite{yang2020adversarial}. 
To test this hypothesis, we consider the Fast Gradient Sign Method (FGSM) \cite{goodfellow2014explaining}, which constructs adversarial examples in one single step. We consider two different pixel-wise perturbation amount (\emph{i.e.,} $\epsilon$) of 0.07 and 0.007. 
We evaluate this adversarial attack on the SSL model trained with the CIFAR-10 dataset.
In Table \ref{tab:adv}, we present the performance of networks trained with Mixup-LR compared against MixMatch. As we can see for all the models trained with different labeled numbers (250, 1000, and 4000), Mixup-LR demonstrates better robustness compared to MixMatch. 
However, note that the result of adversarial robustness might be biased since Mixup-LR was already better in terms of clean accuracy (as seen in Table \ref{tab:cifar10}). 
To investigate this, we evaluate the drop in performance after an adversarial attack. We present the result in Fig. \ref{fig:adv_drop} where the percentage drop of the presented Mixup-LR (solid line) is less than the corresponding MixMatch model (dotted line). This confirms that the presented Mixup-LR is robust compared to the model trained with the Mixup strategy only.

\begin{table}[t]
\centering
    \begin{tabular}[t]{l|c|c|c|c}
    \hline
    & \multicolumn{2}{c|}{MixMatch} & \multicolumn{2}{c}{Mixup-LR}\\
    $\epsilon$ & 0.007 & 0.07 & 0.007 & 0.07 \\
    \hline
    250 &  68.01 $\pm$ 10.03 & 75.26 $\pm$ 7.64 & 55.24 $\pm$ 4.79 & 63.67 $\pm$ 4.00\\
    1000 &  53.68 $\pm$ 2.35 & 63.67 $\pm$ 1.48 & 47.72 $\pm$ 3.04 & 57.64 $\pm$ 3.16\\
    4000 &  57.73 $\pm$ 1.80 & 67.55 $\pm$ 2.20 & 53.55 $\pm$ 6.08 & 61.69 $\pm$ 5.39\\
    \hline
    \end{tabular} 
    \caption{Accuracy of unlabeled data points on white-box FGSM adversarial examples on CIFAR-10. We compare the presented Mixup-LR against MixMatch for two different $\epsilon$ for three different labeled cases.}
\label{tab:adv}
\end{table}

\begin{figure}[t]
\begin{center}
\includegraphics[scale=0.20]{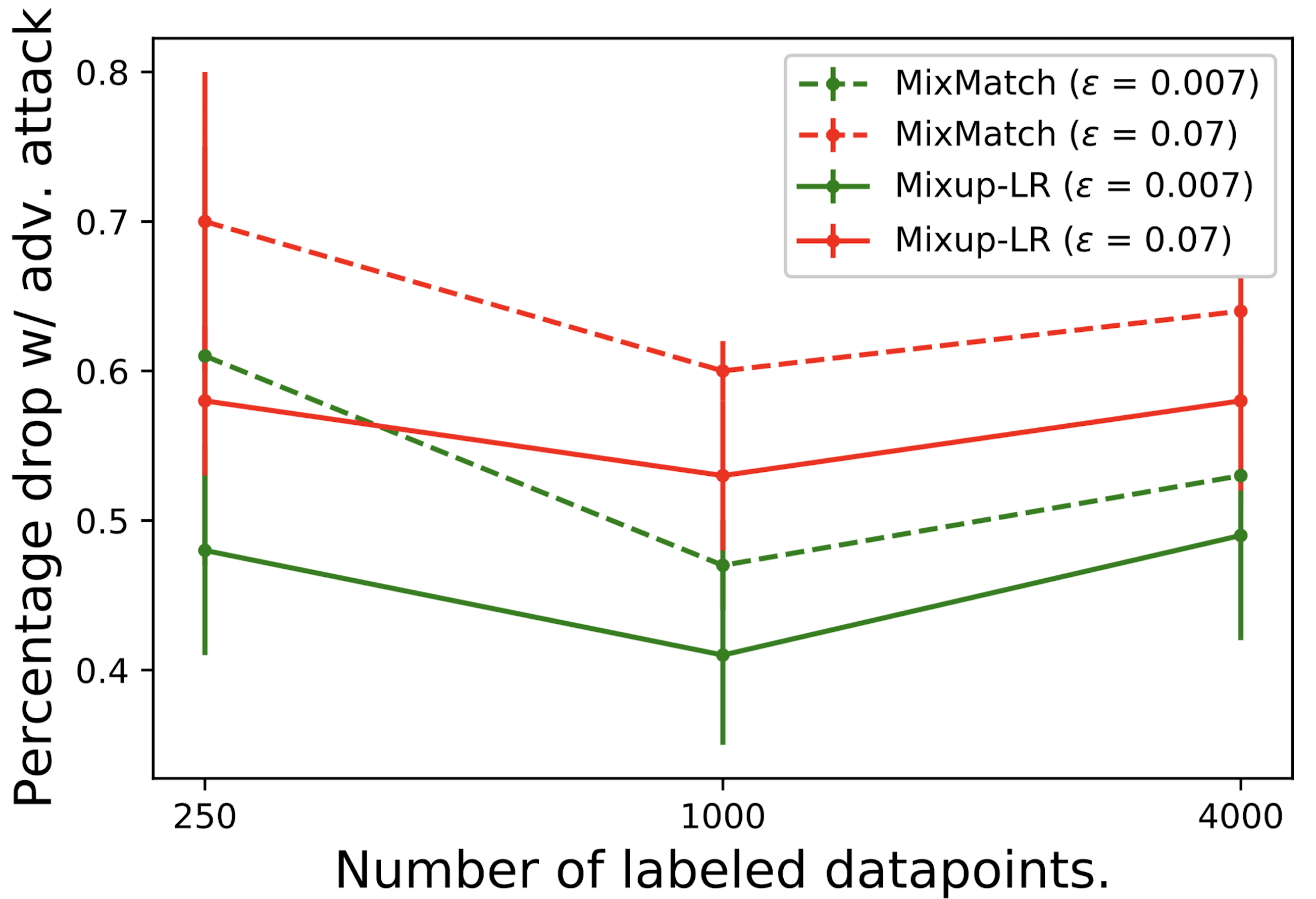}
\end{center}
\caption{Percentage drop for the presented Mixup-LR compared to MixMatch after adversarial attack (FGSM) for two perturbation amount $\epsilon = 0.07$ and $\epsilon = 0.007$.} 
\label{fig:adv_drop}
\end{figure}

\section{Discussion and Future Work}
While this research presents a novel SSL model (Mixup-LR) to improve the 
state-of-the-art MixMatch method on benchmark datasets, there are certain limitations of the current research which we discuss here.

First, the current approach to penalize the violation of the Lipschitz constraint might be expensive as it requires 1 step of back-propagation for each power iteration step while calculating the adversarial perturbation. Although this \textit{extra} computation is standard in Lipschitz regularization (e.g., Gradient Penalty \cite{gulrajani2017improved}), there are recent works that have demonstrated the efficacy of cheap techniques for obtaining adversarial examples \cite{shafahi2019adversarial}. As future work, we will consider such methods to eliminate the overhead cost of generating adversarial examples for Lipschitz regularization. 

Second, in this work, by augmenting the Lipschitz regularization with a mixup-based strategy, we control the Lipschitz constant of the deep neural network. 
As future work, we want to empirically validate this proposition by estimating the network's Lipschitz constants. Although estimation of the Lipschitz constant for deep networks often suffers from either lack of accuracy or poor scalability, some of the recent works have demonstrated the accurate performance \cite{fazlyab2019efficient}. We leave utilizing these latest findings of the literature for the future.

\section{Conclusion}
We presented a novel SSL method, Mixup-LR, which combines a mixup-based strategy with the explicit Lipschitz regularization. 
We first showed the effect of mixup regularization in promoting smoothness, where the mixup approach was found to bound only the Lipschitz constant of the gradient of the neural function. As such, we augmented mixup with explicit Lipschitz regularization to control the Lipschitz constant of the function itself. The efficacy of the presented Mixup-LR was demonstrated on three SSL benchmark data sets and one real-world clinical data set, through improvement over state-of-the-art MixMatch model along with other standard SSL algorithms. We also demonstrated the robustness of Mixup-LR against single-step adversarial attacks. 

\bibliographystyle{IEEEbib}
\bibliography{refs}
\end{document}